\documentclass{article}

\usepackage[final]{corl_2022} 
\PassOptionsToPackage{table,xcdraw}{xcolor}
\usepackage{xr}

\makeatletter
\newcommand*{\addFileDependency}[1]{
  \typeout{(#1)}
  \@addtofilelist{#1}
  \IfFileExists{#1}{}{\typeout{No file #1.}}
}
\makeatother

\newcommand*{\myexternaldocument}[1]{%
    \externaldocument{#1}%
    \addFileDependency{#1.tex}%
    \addFileDependency{#1.aux}%
}
\myexternaldocument{supp}

\usepackage{booktabs}
\usepackage{graphicx}

\usepackage{graphics} 
\usepackage{epsfig} 
\usepackage{cite}
\usepackage{times} 
\usepackage{amsmath} 
\usepackage{amssymb}  
\usepackage{adjustbox}
\usepackage{mathrsfs}   
\usepackage{bm} 
\usepackage{caption}
\usepackage{subcaption}
\usepackage{url}
\usepackage{color}
\usepackage{algorithm}
\usepackage{algorithmic}

\usepackage{amsthm}
\usepackage{svg}
\captionsetup[table]{skip=5pt}

\usepackage{wrapfig}
\usepackage{enumitem}
\usepackage{todonotes}
\usepackage{makecell} 
\usepackage{mathtools}

\newcounter{matriz}

\newcounter{tableeqn}[table]
\renewcommand{\thetableeqn}{\arabic{tableeqn}}
\newcounter{tablesubeqn}[tableeqn]

\usepackage{etoolbox}
\AtBeginEnvironment{align}{\setcounter{subeqn}{0}}
\newcounter{subeqn} %


\newtheorem{problem}{Problem}
\newtheorem{theorem}{Theorem}

\newtheorem{corollary}{Corollary}

\newcounter{daggerfootnote}

\title{Safe Robot Learning in Assistive Devices through Neural Network Repair}
\author{
Keyvan Majd$^{1\dagger}$, Geoffrey Clark$^{1}$, Tanmay Khandait$^{1}$, Siyu Zhou$^{1}$,\AND Sriram Sankaranarayanan$^{2}$, Georgios Fainekos$^{3}$, Heni Ben Amor$^{1}$ \vspace{8 pt}\\
$^{1}$Arizona State University, $^{2}$University of Colorado Boulder, $^{3}$ Toyota NA-R\&D\vspace{8 pt}\\
$^{\dagger}\texttt{majd@asu.edu}$

}
\newcommand\comment[1]{}

%


\begin{document}
\maketitle

\begin{abstract}
Assistive robotic devices are a particularly promising field of application for neural networks (NN) due to the need for personalization and hard-to-model human-machine interaction dynamics. 
However, NN based estimators and controllers may produce potentially unsafe outputs over previously unseen data points. 
In this paper, we introduce an algorithm for 
updating NN control policies to satisfy a given set of formal safety constraints, 
while also optimizing the original loss function. 
Given a set of mixed-integer linear constraints, we define the NN repair problem as a Mixed Integer Quadratic Program (MIQP).
In extensive experiments, we demonstrate the efficacy of our repair method in generating safe policies for a lower-leg prosthesis.
\end{abstract}


\keywords{Imitation Learning, Assistive Robotics, Safety, Prosthesis} 

\section{Introduction}
\label{sec:introduction}

\begin{wrapfigure}{r}{0.4\textwidth}
\vspace{-10pt}
    \centering \includegraphics[width=0.35\textwidth]{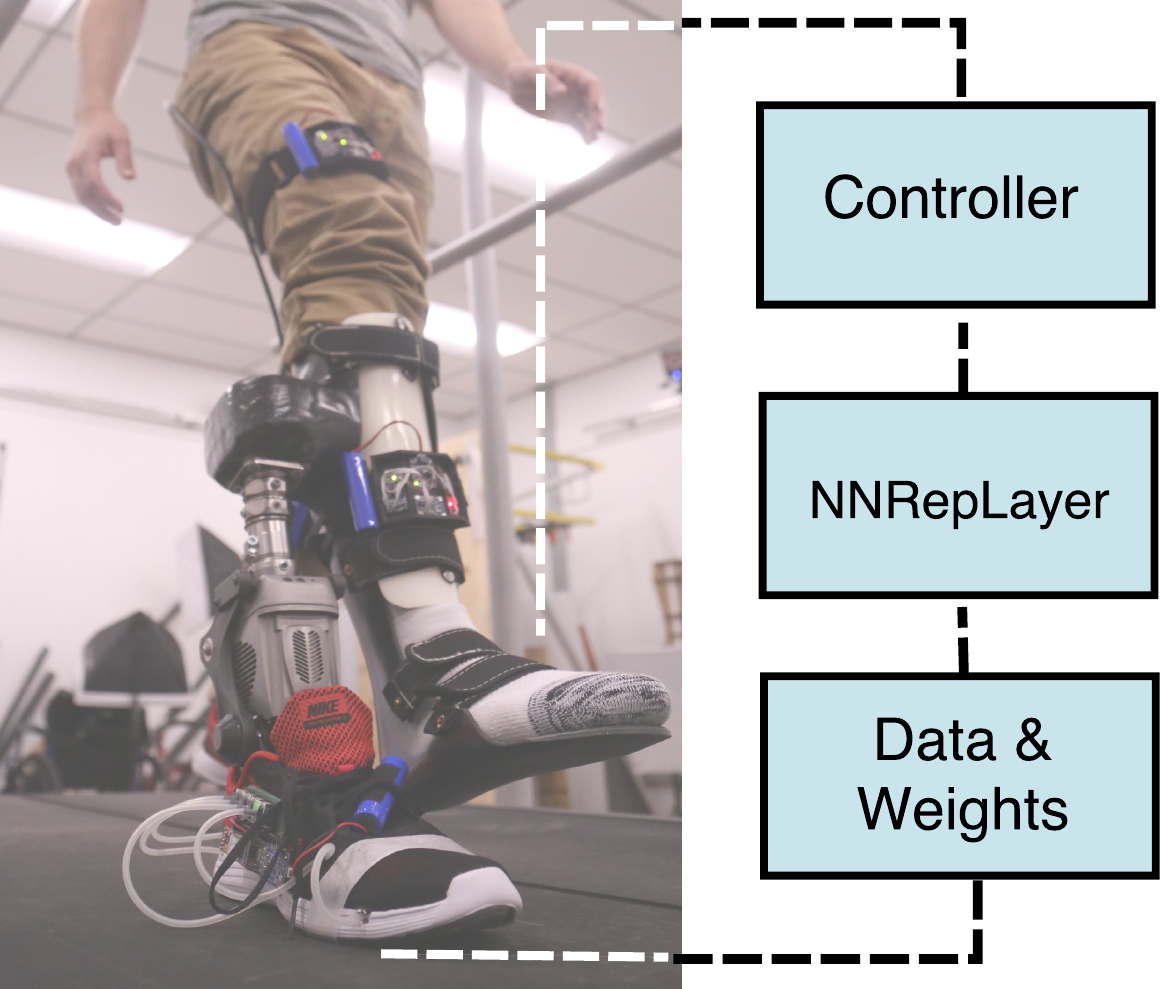}
    \caption{A lower-leg prosthesis running a neural network for control. A neural network repair process ensures that safety constraints are satisfied.}
    \label{fig:teaser}
\end{wrapfigure}


Robot learning has the potential to revolutionize the field of wearable robotic devices, such as prosthetics, orthoses and exoskeletons~\citep{kyrarini2021survey}. Often, such devices are designed in a ``one size fits all'' manner using models based on average population statistics. 
However, machine learning techniques can help adapt control parameters automatically to the wearer's individual characteristics, motion patterns or biomechanics. The result is a substantial improvement in ergonomic comfort and quality-of-life. 
Despite these benefits, the adoption of machine learning and, in particular, deep learning~\citep{lecun2015deep} in this field is still limited, largely due to safety concerns. 
In the case of a prosthesis, for example, it may be important to guarantee that the generated control values do not exceed a maximum threshold under a set of testing conditions. Other safety constraints include limits on velocities, joint angles and bounds on the change in control inputs. 


In this paper, we derive an algorithm for training neural networks controllers to satisfy given safety specifications, in addition to fitting the given training/test data. In particular, we use our approach to derive controllers for a robotic lower-leg prosthesis that satisfy basic safety conditions, (see Fig.~\ref{fig:teaser}). Our proposed approach inputs a trained network (e.g., a network obtained using backpropagation on the training data) along with a specification that places restrictions on the possible outputs for a given set of inputs. It then generates a modified set of weights that obeys the desired safety constraints on the output using deterministic global optimization. We provide theoretical guarantees of optimality for this technique (assuming no numerical errors). Furthermore, in real-world robot experiments we show that the introduced methodology produces safe neural policies for a lower-leg prosthesis satisfying a variety of constraints. 

\paragraph{Contributions.}
The proposed method can repair any layer in a network increasing the size of the solution space and the likelihood of a feasible successful repair. More importantly, it comes with theoretical guarantees on successfully repairing all discovered unsafe samples. When compared to retraining or fine-tuning methods, it also has two distinct benefits: (1) it does not require modification of the training data so as to satisfy the constraints, and (2) it does not utilize gradient optimization methods which do not guarantee constraint satisfaction even for the discovered unsafe data points. 
Finally, when compared to other property driven repair works, i.e., \citep{goldberger2020minimal,FuLi2022iclr,yang2022neural, sotoudeh2021provable,DongEtAl2021qrs}, it is applied for the first time to a real physical system, i.e., a powered prosthetic device, as opposed to a model.

\begin{figure}[t!]
    \centering    \includegraphics[width=\textwidth]{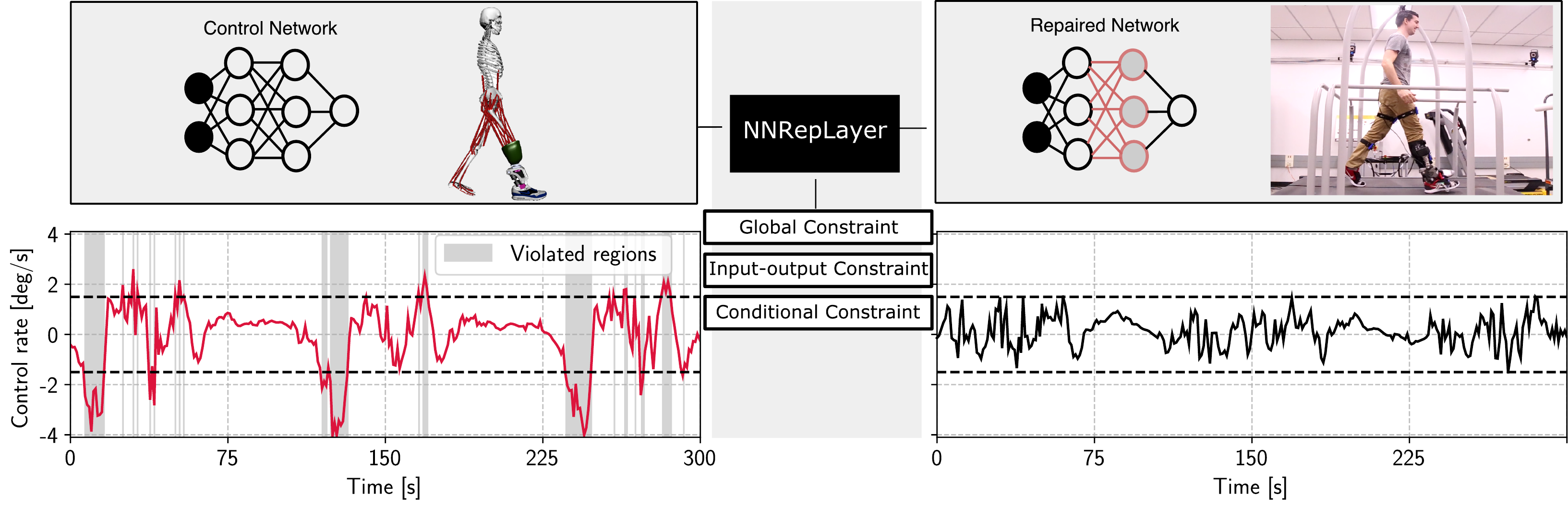}
    \caption{Overview of our approach. Left: a given (unsafe) neural network is trained to control a prosthesis. However, its outputs violate the formal safety constraints. Right: Using our NN Repair strategy, we identify an adjusted set of neural network weights that removes all violations while still maintaining the underlying behavior of the controller.}
    \label{fig:repair_intro}
\end{figure}
\paragraph{Related Works.}
Learning based control for prosthetics is motivated by the challenges in modeling human-prosthesis dynamics, which exhibits time varying behavior.
Several different approaches that utilize learning based methods for controlling prosthetic devices have been proposed in the literature~\citep{Thatte2019rss,clark2020corl}.
Nevertheless, this short review focuses only on neural network (NN) based methods.
\citet{gao2020recurrent} provide a prosthetic controller based on recurrent neural networks, and show that this type of controller effectively minimizes the difference between desired and actual trajectories on a powered prosthetic device. 
In another direction, \citet{kelecs2020development} and \citet{vonsevych2019fingers}  focus on utilizing electromyography (EMG) signals to predict control parameters for  prosthetic ankles and hands. 
Our work follows a similar approach in  predicting ankle control parameters with a neural network, which then drives the prosthesis through a PD controller. 
The verification of neural networks has been widely studied in order to check specifications for a given NN as a standalone component ~\citep{Ehlers2017,TjengXT2019iclr,DuttaJST2018nfm,katz2019marabou} (for a survey see \citep{LiuEtAl2021fto}). or as part of a  closed loop system~\citep{SunKS2019hscc,ivanov2020verifying,DuttaEtAl2018adhs}. 
Testing techniques can produce counterexamples (adversarial samples) for a variety of NN-based applications, e.g.,  \citep{HuangEtAl2017cav,TjengXT2019iclr,TianEtAl2018icse,Dreossi2019,TuncaliEtAl2020tiv,DreossiGYKSS2018ijcai}. 
One approach to ensuring that a NN satisfies given set of safety properties is to use retraining and fine-tuning  based on counter-examples~\citep{sinitsin2019editable,ren2020few, DongEtAl2021qrs}. However, this approach has a number of pitfalls. First, the labels for the counterexamples need to be available.  
This may involve further data-collection, which is often cumbersome. 
Also, gradient descent optimization approaches cannot provide any guarantees that the result satisfies the provided constraints. Our approach, in contrast, avoids generating adversarial examples or performing gradient descent.  
The problem of ``repairing'' a network, which involves modifying weights of the network in a minimal manner to satisfy some safety properties, is thus more involved than simply retraining a network on some additional data involving counterexamples.  Some methods in the literature \citep{goldberger2020minimal,FuLi2022iclr,yang2022neural, sotoudeh2021provable} make progress toward the goal of repairing NNs.
\citet{goldberger2020minimal} can only repair the output layer, which drastically reduces the space of possible successful repairs (and in many cases a repair is not even possible).
The method of \citet{FuLi2022iclr} produces patches for certain partitions of the input space wherein the corresponding outputs are modified.  The direction of adding patches on the NN output is very promising; however, it is unclear how the approach will scale for high dimensional inputs,  since it requires partitioning the input space into affine subregions.
Authors in \citep{sotoudeh2021provable} repair the linear regions related to each faulty sample in the faulty network's weight space using a decoupled DNN architecture. However, this method causes the repaired network to be discontinuous. Therefore, it cannot be employed in robot learning and control applications which is the target application of this paper. This method is not also applicable for the networks with more than three inputs.

\section{Problem Formulation}

Without loss of generality, we motivate and discuss our approach using the task of learning safe robot controllers for a lower-leg prosthesis. The goal of this task is to learn a policy $\pi_\theta$ which generates control values for the ankle angle of the powered prosthesis given a set of sensor values. Most critically, however, policy $\pi_\theta$ is required to satisfy a set of safety constraints $\Psi$. 
Fig.~\ref{fig:repair_intro} provides an overview of our methodology in addressing this challenge. 
For now, we assume that an unsafe prior policy network may exist, see Fig.~\ref{fig:repair_intro}~(left). 
The network parameters $\theta$ may be learned with through imitation learning~\citep{gao2020recurrent}, reinforcement learning~\citep{gao2020} or any of the common machine learning paradigms. 
The policy may be optimized for task efficiency, e.g., stable and low-effort walking gaits, but may not yet satisfy any safety constraints. 
Our goal is to find an adjusted set of network parameters that satisfy any such constraint. 
We may now choose, for example, to restrict the control rate to specific bounds. 
Running the network on the input values of a validation data set reveals that violations occur at a number of time steps, 
as seen in Fig.~\ref{fig:repair_intro}. 
Our approach, called \textbf{NNRepLayer} (Layer-wise Neural Network Repair), takes the original network parameters $\theta$ and the predicates $\Psi$ and yields the updated parameters that generate no violations.

\paragraph{Notation.} 
We denote the set of variables $\{a_1,a_2,\cdots,a_N\}$ with $\{a_n\}^N_{n=1}$.
Let $\pi_{\theta}$ be a network policy with $L$ hidden layers. 
The nodes at each layer $l \in \{l\}^L_{l=0}$ are represented by $x^l$, 
where $ \lvert x^l \rvert$ denotes the dimension of layer $l$ ($x^0$ represents the input of network). 
The network's output is also denoted by $y$ or $\pi_{\theta}(x^0)$.
We consider fully connected policy networks with weight and bias terms $\{(\theta^l_w,\theta^l_b)\}^{L+1}_{l=1}$.
The training data set of $N$ inputs $x^0_n$ and target outputs $t_n$ is denoted by $\{(x^0_n,t_n)\}^N_{n=1}$ 
sampled from the input-output space
$\mathcal{X}\times\mathcal{T}\subseteq\mathbb{R}^{\lvert x^0\rvert}\times\mathbb{R}^{ \lvert t\rvert}$.
We use 
$x^l_n$ to denote the vector of nodes at layer $l$ for sample $n$.
In this work, we focus on the policy networks with the Rectified Linear Unit (ReLU) activation function $R(z) = \max\{0,z\}$. 
Thus, given the $n$\textsuperscript{th} sample, in the $l$\textsuperscript{th} hidden layer, 
we have $x^l = R\left(\theta_w^lx^{l-1}+\theta_b^l\right)$. 
An activation function is not applied to the last layer, 
i.e. $y = \theta_w^{L+1}x^{L}+\theta_b^{L+1}$. 


\paragraph{Problem Statement (Repair Problem).} Let $\pi_{\theta}$ be a trained policy network over the training input-output space 
$\mathcal{X}\times\mathcal{T}\subseteq\mathbb{R}^{ \lvert^0\rvert}\times\mathbb{R}^{ \lvert t\rvert}$ 
and $\Psi(y,x^0)$ be a predicate on the output $y$ of network for a set of inputs of interest $x^0\in \mathcal{X}_r\subseteq\mathcal{X}$. 
The Repair Problem is to modify the weight $\theta_w$ and bias terms $\theta_b$ of $\pi_{\theta}$ 
such that the repaired policy $\pi_{\theta_r}$ satisfies $\Psi(y,x^0)$.

The repair of the policy network should not only satisfy the predicate $\Psi(y,x^0)$ 
but should also maintain the performance of original policy. 
To satisfy the latter, the method proposed in \citep{goldberger2020minimal} ensures the satisfaction of predicate
only with a minimal deviation of weights in the last layer. 
However, as we show latter in the experimental results, the repair of last layer is not necessarily 
feasible or sufficient to satisfy the predicates with a minimal deviation from the original parameters. 
Moreover, the minimal deviation from the original weights is not a sufficient guarantee to maintain the original performance of network.
It is well-known that subtle changes in the weights may cause the network to significantly deviate from its original performance~\citep{Zeng+Yeung/2001/Sensitivity}. 
Therefore, it is important for the repaired policy $\pi_{\theta_r}$ 
to also minimize the loss w.r.t. its original training data.
We propose NNRepLayer (Layer-wise Neural Network Repair) that satisfies a predicate $\Psi(y,x^0)$ 
by repairing a specific layer of the policy network 
while minimizing the training loss.
\section{NNRepLayer} \label{sec: MinRepair-MIP}
We can formulate our framework as the minimization problem of the loss function $E(\theta_w,\theta_b)$ 
subject to $(x^0,t)\in\mathcal{X}\times\mathcal{T}$ and $\Psi(y,x^0)$ for $x^0\in \mathcal{X}_{r}$. 
However, the resulting optimization problem is non-convex and difficult to solve due to the nonlinear ReLU activation function and high-order nonlinear constraints resulted from the multiplication of terms involving the weight/bias variables. 
In our approach, we obtain a sub-optimal solution by just focusing on repairing a single layer. 
We therefore modify the weight and bias terms of a single layer to adjust the predictions so as to minimize $E(\theta_w,\theta_b)$ and to satisfy $\Psi(y,x^0)$. 
Thus, we solve the following problem
\begin{problem}\label{SingleLayMinRepair-formulation} Let $\pi_{\theta}$ denote a trained policy network with 
$L$ hidden layers over the training input-output space 
$\mathcal{X}\times\mathcal{T}\subseteq\mathbb{R}^{\lvert x^0\rvert}\times\mathbb{R}^{\lvert t\rvert}$ 
and $\Psi(y,x^0)$ denote a predicate representing constraints on the output $y$ of $\pi_{\theta}$
for the set of inputs of interest $x^0\in \mathcal{X}_r\subseteq\mathcal{X}$. 
NNRepLayer modifies the weights of a layer $l\in\{1,\cdots,L+1\}$ in $\pi_{\theta}$ such that the new network $\pi_{\theta_r}$ satisfies $\Psi(y,x^0)$ while minimizing the loss of network $E(\theta_w^l,\theta_b^l)$ with respect to its original training set.
\end{problem}

Since $\mathcal{X}_r$ and $\mathcal{X}$ are not necessarily convex,
we formulate NNRepLayer over a data set
$\{(x^0_n,t_n)\}^N_{n=1}\sim \mathcal{X}\times\mathcal{T} \cup \mathcal{X}_r\times\Tilde{\mathcal{T}}$,
where $\Tilde{{\mathcal{T}}}$ is the set of original target values of inputs in $\mathcal{X}_r$. 
The predicate $\Psi(x^0,y)$ defined over $x^0\in \mathcal{X}_r$ is not necessarily compatible with the target values in $\Tilde{\mathcal{T}}$. 
It means that the predicate may bound the NN output for $\mathcal{X}_r$ input space such 
that not allowing an input $x^0\in \mathcal{X}_r$ to reach its target value in
$\Tilde{\mathcal{T}}$. 
It is a natural constraint in many applications. 
For instance, due to the safety constraints, 
we may not allow a NN controller to follow its original control reference for a given unsafe set of input states.
For a given layer $l$, we also define $E(\theta_w^l,\theta_b^l)$ in the form of sum of square loss 
$E(\theta_w^l,\theta_b^l) = \sum^{N}_{n=1}\lVert y_n(x_n^0,\theta_w^l,\theta_b^l)-t_n\rVert^2_2$,
where $\lVert\cdot\rVert_2$ denotes the Euclidean norm.
Here, since we only repair the weight and bias terms of target layer $l$, the loss term $E$ is a function of $\theta_w^l$ and $\theta_b^l$, respectively. 
Hence, the weight and bias terms of all layers except the target layer $l$ are fixed. 
We define our optimization formulation as follows.
\paragraph{NNRepLayer Optimization Formulation.} Let $\pi_{\theta}$ be a neural network with $L$ hidden layers,
$\Psi(y,x^0)$ be a predicate, 
and $\{(x^0_n,t_n)\}^N_{n=1}$ be an input-output data set sampled from  $(\mathcal{X}\times\mathcal{T}) \cup (\mathcal{X}_r\times\Tilde{\mathcal{T}})$ 
over the sets $\mathcal{X}$, $\mathcal{X}_r$, $\mathcal{T}$, 
and $\Tilde{\mathcal{T}}$ all as defined in Problem \ref{SingleLayMinRepair-formulation}. 
NNRepLayer minimizes the loss (\ref{eq:cost}) by modifying $\theta^l_w$ and $\theta^l_b$
subject to the constraints (\ref{eq:last_layer})-(\ref{eq:w_bound}).

\begin{wraptable}{r}{7.7cm}
\begingroup
\setlength{\tabcolsep}{7pt} 
\renewcommand{\arraystretch}{1.6} 
\begin{tabular}{lll}
\toprule
\refstepcounter{tableeqn} (\thetableeqn)
\label{eq:cost}
&
\multicolumn{2}{l}{$\underset{\substack{\theta_w^l,\theta_b^l,\delta,y_n,\{x_n^i\}_{i=l}^{L},\{\phi_n^i\}_{i=l}^{L}} } {\text{min}}~E(\theta_w^l,\theta_b^l)+\delta$,}
\\
&
\multicolumn{2}{l}{s.t.}
\\

\refstepcounter{tableeqn} (\thetableeqn) \label{eq:last_layer}
&
$y_n=\theta_w^{L+1}x^{L}_n+\theta_b^{L+1}$,
 
& 
\\
\refstepcounter{tableeqn} (\thetableeqn) \label{eq:mid_layer}
&
$x^i_n=R(\theta_w^ix^{i-1}_n+\theta_b^i)$,

& 
\text{for } $\{i\}_{i=l}^{L}$
\\
\refstepcounter{tableeqn} (\thetableeqn) \label{eq:predicates}
&
$\Psi(y_n,x^0_n)$,
& \text{for }$x^0_n\in \mathcal{X}_r$
\\
\refstepcounter{tableeqn} (\thetableeqn)\label{eq:w_bound}
&
\multicolumn{2}{l}{$\delta \geq \lVert \theta_w^l-\theta_w^{l,init}\rVert_{\infty},~\lVert \theta_b^l-\theta_b^{l,init}\rVert_{\infty}\geq 0$.}  
 \\\bottomrule
\end{tabular}
\endgroup
\end{wraptable} 

Here, constraint (\ref{eq:last_layer}) represents the linear forward pass of network's last layer. Constraint (\ref{eq:mid_layer})
represents the forward pass of hidden layers starting from the layer $l$. 
Except the weight and bias terms of the $l$\textsuperscript{th} layer, i.e. $\theta^l_w$ and $\theta^l_b$, 
the weight and bias terms of the subsequent layers $\{(\theta^i_w,\theta^i_b)\}^{L+1}_{i=l+1}$ are fixed.
The sample values of $x_n^{l-1}$ are obtained by the weighted sum of the nodes in its previous layers starting from $x_n^0$ for all $N$ samples $\{n\}^N_{n=1}$.
Each ReLU node $x^l$ is formulated using Big-M formulation
\citep{belotti2011disjunctive,tsay2021partition} by $x^l\geq \theta_w^{l}x^{l-1}_n+\theta_b^{l}$, $x^l\leq
\big(\theta_w^{l}x^{l-1}_n+\theta_b^{l}\big)-lb(1-\phi)$, and $x^l\leq ub~\phi$,
where $x^l \in [0,\infty)$, and $\phi\in\{0,1\}$ determines the activation status of node $x^l$. 
The bounds $lb, ub\in \mathbb{R}$ are known as Big-M coefficients, $\theta_w^{l}x^{l-1}_n+\theta_b^{l}\in[lb,ub]$, 
that need to be as tight as possible to improve the performance of MIQP solver. 
We used Interval Arithmetic (IA) Method \citep{moore2009introduction, TjengXT2019iclr} 
to obtain tight bounds for ReLU nodes (read the supplementary materials, Sec. \ref{supp: IA}, for further details on IA). 
Constraint (\ref{eq:predicates}) is a given predicate on $y$ defined over
$x^0\in\mathcal{X}_r$. NNReplayer addresses the predicates of the form $\bigvee_{c=1}^{C}\psi_c(x^0,y)$ where $C$ represents the number of disjunctive propositions and $\psi_i$ is an affine function of $x^0$ and $y$.
Finally, constraint (\ref{eq:w_bound}) bounds the entry-wise max-norm error between the weight and bias terms $\theta^l_w$ and $\theta^l_b$, 
and the original $\theta_w^{l,init}$ and $\theta_b^{l,init}$ by $\delta$.
Considering the quadratic loss function $E(\theta_w^l,\theta_b^l)$ and the affine disjunctive forms of $\Psi(y_n,x^0_n)$ and $R(\theta_w^ix^{i-1}_n+\theta_b^i)$, we solve NNRepLayer as a Mixed Integer Quadratic Program (MIQP).

\begin{theorem}
\label{theorem: feasibility}
Given the predicate $\Psi(y,x^0)$, 
and the input-output data set $\{(x^0_n,t_n)\}^N_{n=1}$ 
sampled from  $(\mathcal{X}\times\mathcal{T}) \cup (\mathcal{X}_r\times\Tilde{\mathcal{T}})$ 
over the sets $\mathcal{X}$, $\mathcal{X}_r$, $\mathcal{T}$, 
and $\Tilde{\mathcal{T}}$ as defined in Problem \ref{SingleLayMinRepair-formulation}, 
assume that $\theta_w^{l}$ and $\theta_b^{l}$ are feasible solutions to (\ref{eq:cost})-(\ref{eq:w_bound}).
Then, $\Psi(\pi_{\theta_r}(x^0_n),x^0_n)$ is satisfied
for all input samples $x^0_n$.
\end{theorem}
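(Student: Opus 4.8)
The plan is to show that the conclusion is essentially a direct consequence of constraint (\ref{eq:predicates}), once we verify that the decision variable $y_n$ in any feasible solution coincides with the true forward-pass output $\pi_{\theta_r}(x_n^0)$ of the repaired network. Thus the real content of the argument is that constraints (\ref{eq:last_layer}) and (\ref{eq:mid_layer}) faithfully encode the forward pass of $\pi_{\theta_r}$, so that the predicate imposed on $y_n$ transfers verbatim to the network output. I would begin by fixing an arbitrary feasible solution $(\theta_w^l,\theta_b^l,\delta,\{y_n\},\{x_n^i\}_{i=l}^L,\{\phi_n^i\}_{i=l}^L)$ and letting $\pi_{\theta_r}$ denote the network obtained from $\pi_\theta$ by replacing the layer-$l$ parameters with $\theta_w^l,\theta_b^l$ while keeping all other layers fixed.

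The key step is a lemma establishing that the Big-M constraints exactly compute each ReLU node. For a fixed node at layer $i$ with pre-activation $z = \theta_w^i x_n^{i-1}+\theta_b^i \in [lb,ub]$, I would argue by a case split on the binary variable $\phi_n^i$. When $\phi_n^i = 0$, the constraints $x \le ub\,\phi$ and $x \ge 0$ force $x = 0$, while $x \ge z$ forces $z \le 0$; when $\phi_n^i = 1$, the constraints $x \ge z$ and $x \le z - lb(1-\phi) = z$ force $x = z$, while $x \ge 0$ forces $z \ge 0$. In both cases $x_n^i = \max\{0,z\} = R(\theta_w^i x_n^{i-1}+\theta_b^i)$, so each node variable equals the genuine ReLU output, provided the coefficients $lb,ub$ are valid bounds on $z$.

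With this lemma in hand, I would run an induction over layers $i = l, l+1, \dots, L$. The base quantity $x_n^{l-1}$ is not a decision variable: it is the forward pass of the unmodified layers $1,\dots,l-1$ applied to the actual sample input $x_n^0$, hence equals the corresponding activation of $\pi_{\theta_r}$. The inductive step applies the ReLU lemma to (\ref{eq:mid_layer}) to propagate equality one layer at a time, and (\ref{eq:last_layer}) then gives $y_n = \theta_w^{L+1}x_n^{L}+\theta_b^{L+1} = \pi_{\theta_r}(x_n^0)$ for the linear output layer. Substituting this identity into constraint (\ref{eq:predicates}), which is enforced for every sample with $x_n^0 \in \mathcal{X}_r$, yields $\Psi(\pi_{\theta_r}(x_n^0),x_n^0)$ for all such samples, which is the claim.

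I expect the main obstacle to be the proviso in the ReLU lemma, namely the validity of the Big-M coefficients $lb,ub$ as bounds on the pre-activation $z$. For the fixed layers $i > l$ this is delicate because the repaired activations $x_n^l$ may differ from the original ones, so the interval-arithmetic bounds must be derived from the reachable range of the repaired layer rather than from the original network; for the repaired layer $l$ itself the pre-activation depends on the decision variables $\theta_w^l,\theta_b^l$, so the bounds must hold uniformly over the feasible parameter box. Establishing that the interval-arithmetic procedure produces coefficients that are simultaneously valid over all feasible repairs is the one place where the ``no numerical errors'' assumption is genuinely used; given valid bounds, the remainder of the argument is a routine encoding-correctness verification.
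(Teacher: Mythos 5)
Your proposal is correct and follows the same logical route as the paper; in fact, the paper's entire proof is a single sentence observing that any feasible solution satisfies the hard constraint (\ref{eq:predicates}), so everything else in your write-up (the Big-M case split and the layer-by-layer induction showing $y_n = \pi_{\theta_r}(x_n^0)$) is exactly the encoding-correctness content that the paper leaves implicit. This expansion is worthwhile: the theorem is vacuous unless the decision variable $y_n$ provably equals the true forward pass of the repaired network, and your lemma is what closes that gap.

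One clarification on the issue you flag as the ``main obstacle'': the validity of the Big-M coefficients $lb,ub$ is not actually needed for the soundness direction you are proving. Your own case analysis shows this. If $\phi = 0$, feasibility forces $x = 0$ and $z \le 0$ (and additionally $z \ge lb$); if $\phi = 1$, it forces $x = z$ and $z \ge 0$ (and additionally $z \le ub$). In both cases $x = \max\{0,z\}$ holds for \emph{any} choice of $lb,ub$ — coefficients that fail to bound the true pre-activation can only make the sample's constraints infeasible (since neither case admits a valid $\phi$), never admit a point that miscomputes the ReLU. So bound validity is a \emph{completeness} concern — whether feasible solutions exist at all, and whether the genuine minimal repair is cut off — while the theorem as stated presupposes feasibility and therefore goes through without any assumption on $lb,ub$ or on numerical exactness of the interval arithmetic. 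The uniform-validity question you raise (bounds holding over the whole feasible parameter box at layer $l$ and over the perturbed downstream activations) is the right thing to worry about, but it belongs to the discussion of why the MIQP is a faithful relaxation-free encoding of Problem \ref{SingleLayMinRepair-formulation}, not to the proof of Theorem \ref{theorem: feasibility}.
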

\begin{proof}
 Since the feasible solutions $\theta_w^{l}$ and $\theta_b^{l}$ satisfy the hard constraint (\ref{eq:predicates}) 
 for the repair data set $\{(x^0_n,t_n)\}^N_{n=1}$, 
 $\Psi(\pi_{\theta_r}(x^0_n),x^0_n)$ is satisfied.
\end{proof}

Given Thm. \ref{theorem: feasibility}, the following Corollary is straightforward.
\begin{corollary}
Given the predicate $\Psi(y,x^0)$, 
and the input-output data set $\{(x^0_n,t_n)\}^N_{n=1}$ 
sampled from  $(\mathcal{X}\times\mathcal{T}) \cup (\mathcal{X}_r\times\Tilde{\mathcal{T}})$ 
over the sets $\mathcal{X}$, $\mathcal{X}_r$, $\mathcal{T}$, 
and $\Tilde{\mathcal{T}}$ as defined in Problem \ref{SingleLayMinRepair-formulation}, 
assume that $\theta_w^{l^*}$ and $\theta_b^{l^*}$ are the optimal solutions to the NNRepLayer.
Then, for all input samples $x^0_n$ from $\{(x^0_n,t_n)\}^N_{n=1}$, 
$\Psi(\pi_{\theta_r}(x^0_n),x^0_n)$ is satisfied.
\end{corollary}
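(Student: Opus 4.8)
The plan is to reduce the Corollary directly to Theorem~\ref{theorem: feasibility} by exploiting the elementary fact that optimality implies feasibility. First I would observe that, by the very definition of a constrained minimization problem, any optimal solution $(\theta_w^{l^*},\theta_b^{l^*})$ of the NNRepLayer MIQP must lie in the feasible region; that is, it satisfies every constraint (\ref{eq:last_layer})--(\ref{eq:w_bound}), since otherwise it could not be a minimizer of the objective $E(\theta_w^l,\theta_b^l)+\delta$ over that region. In particular, the optimal pair satisfies the hard predicate constraint (\ref{eq:predicates}).

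Next I would invoke Theorem~\ref{theorem: feasibility} with the ``feasible solution'' in its hypothesis instantiated by the optimal one, i.e. taking $\theta_w^l=\theta_w^{l^*}$ and $\theta_b^l=\theta_b^{l^*}$. Because these values are feasible by the previous step, the hypothesis of the theorem is met verbatim, and its conclusion gives that $\Psi(\pi_{\theta_r}(x^0_n),x^0_n)$ holds for every input sample $x^0_n$ in the repair data set $\{(x^0_n,t_n)\}^N_{n=1}$, which is precisely the claim. No additional computation is required: the forward-pass equalities (\ref{eq:last_layer})--(\ref{eq:mid_layer}) and the weight-deviation bound (\ref{eq:w_bound}) play no role in establishing predicate satisfaction, and the whole argument rests on the single observation that an optimum is feasible and hence obeys (\ref{eq:predicates}).

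The nearest thing to an obstacle — and the only point I would take care to state explicitly — is that the Corollary tacitly presupposes the \emph{existence} of an optimal solution. I would therefore make clear that the result is conditioned on the MIQP being feasible (equivalently, on the repair problem admitting at least one solution); under that assumption an optimizer exists and necessarily belongs to the feasible set, so the reduction above goes through. Since Theorem~\ref{theorem: feasibility} already supplies the implication ``feasible $\Rightarrow$ predicate satisfied,'' the Corollary follows immediately and is, as the authors note, straightforward.
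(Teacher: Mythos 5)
Your reduction---an optimum is by definition feasible, hence Theorem~\ref{theorem: feasibility} applies directly---is exactly the argument the paper intends when it calls the Corollary ``straightforward'' given the theorem, so your proposal matches the paper's (implicit) proof. Your added caveat about existence of an optimizer is a reasonable clarification but not a departure from that route.
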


\section{Evaluation}
\label{sec:result}

We explore the applicability of the framework in satisfying the following three types of constraints. 
\textbf{Global constraints} that encode global bounds on the network's output, 
i.e., $y\in [y_{min},y_{max}]$.
\begin{wrapfigure}[12]{r}{0.2\textwidth}
\vspace{-15pt}
  \begin{center}
    \includegraphics[scale = 1.1]{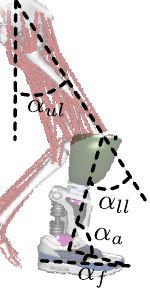}
  \end{center}
  \caption{Prosthetic device model.}
  \label{fig:foot}
\end{wrapfigure}
\textbf{Input-output constraints} that ensure the network's output $y$ 
to stay within a certain bound with respect to the network's input $x^0$, 
i.e., $\{\psi_c(x^0,y)\leq 0\}^C_{c=0}$, where $C$ is the number of constraints and $\psi_c$ is an affine function of $x^0$ and $y$.
Finally, \textbf{conditional constraints} that encode if-then-else constraints described as 
$\{\psi_c(x^0,y)\leq 0 \text{, if }x^0,y\in S_c\}^C_{c=0}$, 
where $C$ specifies the number of conditions, 
$\psi_c$ is an affine function of $x^0$ and $y$,
and $S_c\subseteq \mathcal{X}\times\mathcal{T}$. 
We designed a number of experiments to validate that our repair framework can successfully apply these
constraints to the policy network. 
Following our motivation, all experiments were performed on the prosthetic walking gait generation task introduced in Fig.~\ref{fig:repair_intro}. 
Through these experiments we aim to answer the following questions: 
(1) Does our method enable the Prosthetic device to address all the three types of aforementioned constraints? 
(2) Can the repaired controller be employed in a real walking scenario successfully? 
(3) How robust is the policy repaired through our technique against the unseen constraint-violating samples?

\paragraph{Experimental Setup.}
In our experiment, we train a policy network $\pi_{\theta}$ for controlling a prosthesis, 
which then undergoes the repair process to ensure compliance with the safety constraints. 
To this end, we first train the model using an imitation learning~\citep{schaal1999imitation} strategy. 
For data collection, we conducted a study approved by the Institutional Review Board (IRB), 
in which we recorded the walking gait of a healthy subject without any prosthesis. 
Walking data included three inertial measurement units (IMUs) mounted via straps to the upper leg (Femur), 
lower leg (Shin), and foot. 
The IMUs acquired both the angle and angular velocity of each limb portion in the world coordinate frame at 100Hz. 
Ankle angle $\alpha_a$ was calculated as a post process from the foot and lower limb IMUs. 
We then trained the NN to generate the ankle angle from upper and lower limb IMU sensor values.
More specifically, the NN model receives the angle and velocity from the upper and lower limb sensors 
(network inputs $x^0$),
$\alpha_{ul},\dot{\alpha}_{ul}, \alpha_{ll}, \dot{\alpha}_{ll}$, respectively,
to predict the ankle angle $\alpha_a$ (network output $y$) which is, later, 
used as the control parameter for a PD controller on the prosthetic. 
See Fig.~\ref{fig:foot} for a visualization of the individual sensor readings. We used a sliding window of input variables, denoted as $dt$ ($dt=10$ in all our experiments), 
to account for the temporal influence on the control parameter and 
to accommodate for noise in the sensor readings. 
Therefore, the input to the network is $dt\times \lvert x^0\rvert$,  or more specifically the current and previous $dt$ sensor readings. 
In all experiments, we trained a three-hidden-layer deep policy network with $32$ ReLU nodes at each hidden layer. After the networks were fully trained we assessed the policy for constraint violations and collected samples for NNRepLayer.
We tested NNRepLayer on the last and the second to the last layer of network policy
to satisfy the constraints with a subset of the original training data including both adversarial and non-adversarial samples. In all experiments, we used $150$ samples in NNRepLayer and a held out set of size $2000$ for testing. Finally, the repaired policies to satisfy global and input-output constraints are tested on a prosthetic device for $10$ minutes of walking, see Fig. \ref{fig:exec_signal}. More specifically, the same healthy subject was fitted with an ankle bypass; a carbon fiber structure molded to the lower limb and constructed such that a prosthetic ankle can be attached to allow the able-bodied subject to walk on the prosthesis, as shown in Fig. \ref{fig:repair_intro}. 
The extra weight and off-axis positioning of the device incline the individual towards slower, asymmetrical gaits that generates strides out of the original training distribution \citep{cortino2022stair, gao2020recurrent}. 
The participant is then asked to walk again for $10$ minutes to assess whether constraints are satisfied.

To evaluate if the repair can be generalized to the unseen adversarial samples, we analyze the \textit{violation degree}. The violation degree is measured as the distance of the network output with respect to the constraint set. For each experiment, we explain how this distance between the output and the constraint set is calculated. We compared our framework with retraining, fine-tuning \citep{sinitsin2019editable,taormina2020performance,ren2020few,DongEtAl2021qrs}, and the patch-based repair method in \citep{FuLi2022iclr} (REASSURE). 
Adversarial samples in the repair data set are hand-labeled for fine-tuning and retraining
so that the target outputs satisfy the given predicates.
In fine-tuning, as proposed in \citep{sinitsin2019editable,taormina2020performance}, we used the collected adversarial data set to train all the parameters of the original policy by gradient descent using a small learning rate ($10^{-4}$). To avoid over-fitting to the adversarial data set, we trained the weights of the top layer first, and thereafter fine-tuned the remaining layers for a few epochs.
The same hand-labeling strategy is applied in retraining, except that a new policy is trained from scratch for all original training samples.
In both methods, we trained the policy until all the adversarial samples in 
the repair data set satisfy the given predicates on the network's output. 
Our code is available on GitHub: \url{https://github.com/k1majd/NNRepLayer.git}.
\subsection{Experiments and Results}
\paragraph{Global Constraint.} 


\begin{figure*}[t]

    \centering
    \includegraphics[width=\textwidth]{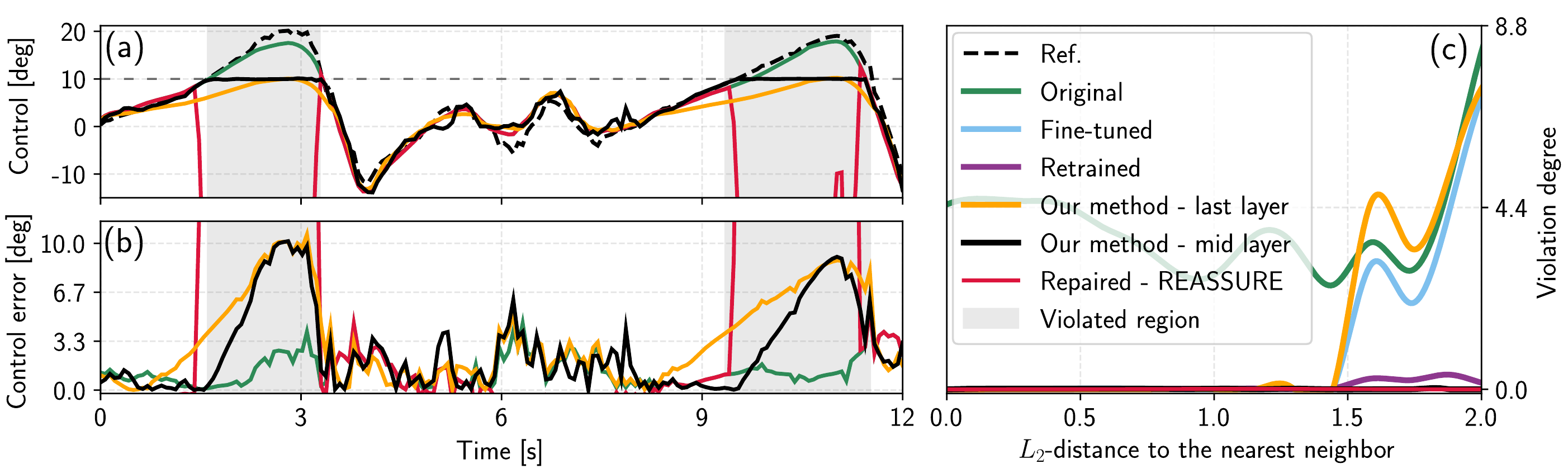}

    \caption{Global constraint: (a) ankle angle, $\alpha_{a}$, (b) the error between the predicted and the reference controls, (c) the violation degree vs. $L_2$-distance between the test and repair sample inputs.}
    \label{fig:global_const}
\end{figure*}

The global constraint ensures that the prosthesis control, i.e., $\alpha_{a}$, stays within a certain range and never outputs an unexpected large value that disturbs the user's walking balance. 
\begin{wrapfigure}{r}{0.5\textwidth}
\vspace{-5pt}
  \begin{center}
    \includegraphics[scale=0.48]{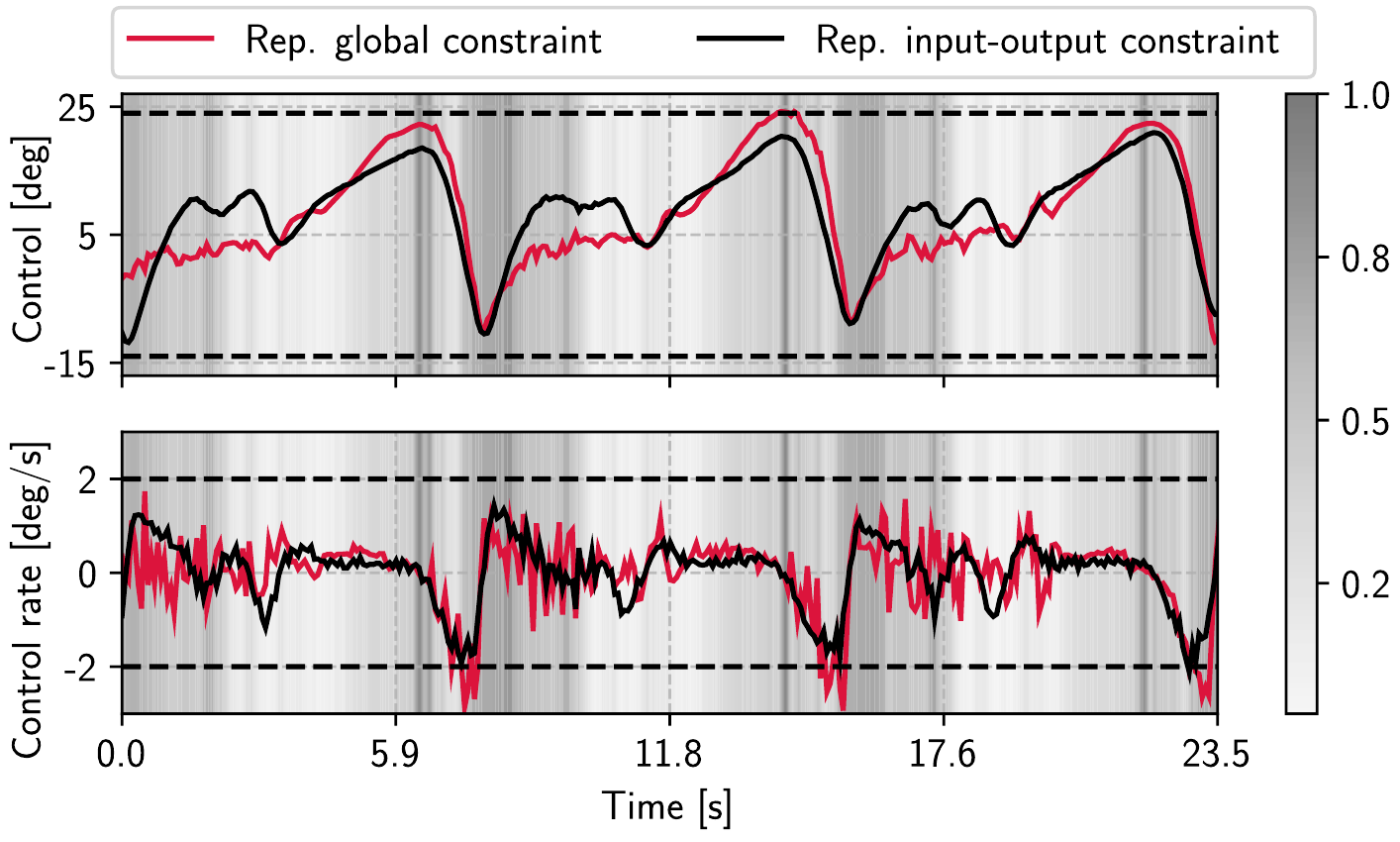}
  \end{center}
  \vspace{-10pt}
  \caption{Real prosthesis walking test results for imposing the global constraint of $[-14,24]$ to the control (shown in red) and bounding the control rate by $2$ [deg/s] (shown in black). The color bar represents the normalized $L_2$-distance of each test input to its nearest neighbor in the repair set. }
  \vspace{-5pt}
\label{fig:exec_signal}
\end{wrapfigure}
Additionally, the prosthetic device we utilized in these scenarios contains a parallel compliant mechanism.
As such, either the human subject or the robotic controller could potentially drive the mechanism into the hard limits, potentially damaging the device.
In our walking tests, we therefore specified global constraints such that the ankle angle stays within the bounds of $[-14,24]$ [deg] regardless of whether it is driven by the human or the robot. 
In simulation experiments, we enforced artificially strict bounds on the ankle angle $\alpha_{a}$ 
to never exceed $\alpha_{a}=10$ [deg] which is a harder bound to satisfy. 
We defined the degree of violation as $0$ if 
$\alpha_{a} \in [\alpha_a^{min}, \alpha_a^{max}]$, and $\min\{\lvert \alpha_{a}-\alpha_a^{max}\rvert , \lvert \alpha_{a} - \alpha_a^{min}\rvert\}$, otherwise.
As shown in Fig. \ref{fig:global_const} (a)-(b), 
the repaired network successfully satisfies
the constraints in the original faulty regions
while maintaining the tracking performance of the controller in the unconstrained regions. 
Figure \ref{fig:global_const} (c) demonstrates that the violation degree stays almost zero for even distant originally violating data points after repairing
the mid layer.
The red control signal in Fig. \ref{fig:exec_signal} also shows that our method successfully imposes the control bounds $[-14,24]$ to the actual prosthesis walking test.

\paragraph{Input-output Constraint.} 

Deep neural networks as highly
non-linear function approximators have the ability 
to change the outputs more rapidly than what is feasible for the robotic prosthesis or for the human subject to accommodate.
Therefore, we propose an additional constraint over the possible change of control actions from one time-step to the next.
This constraint should act to both smooth the control action in the presence of sensor noise, as well as to reduce hard peaks and oscillations in the control action.
To capture this constraint as an input-output relationship, we trained the policy network 
by adding the previous $dt$ control actions $\{\alpha_a(i)\}^{t-1}_{i=t-dt}$ 
as inputs to the policy network 
along the values of upper and lower limb sensors. 
Imposed constraints in this example follow the form $\lvert \alpha_a(t) - \alpha_a(t-1)\rvert \leq \Delta \alpha^{max}_a$.
In prosthetic walking tests, we bounded the control rate by $\Delta \alpha^{max}_a = 2$ [deg/s], and in our simulations we tested $\Delta \alpha^{max}_a = 1.5$ and $\Delta \alpha^{max}_a = 2$ [deg/s]. 
Our simulation results in Fig. \ref{fig:dynamic_const_xxx} demonstrate that NNRepLayer satisfies both bounds on the control rate which subsequently results in a smoother control output. 
It can also be observed that NNRepLayer successfully preserves the tracking performance of controller. 
In this experiment, we defined the violation degree as $0$ if the $\lvert\Delta \alpha_a(t)\rvert \leq \Delta \alpha^{max}_a$ and $\lvert\Delta \alpha_a(t) - \Delta \alpha^{max}_a\rvert$, otherwise.
Figure \ref{fig:violate_dyn_cond} (a) demonstrates that the violation degree of NNRepLayer
is almost zero by increasing the distance of the test samples from the repair set. 
Same results are obtained in the actual prosthetic walking tests as shown in Fig. \ref{fig:exec_signal}. 
The satisfaction of input-output constraints (black signal) is achieved even for the samples that are 
distant from the repair set.
Finally, in this experiment, applying NNRepLayer to the last layer does not 
obtain even a feasible solution to the optimization problem (\ref{eq:cost})-(\ref{eq:w_bound}).


\begin{figure*}[t]

    \centering
    \includegraphics[width=\textwidth]{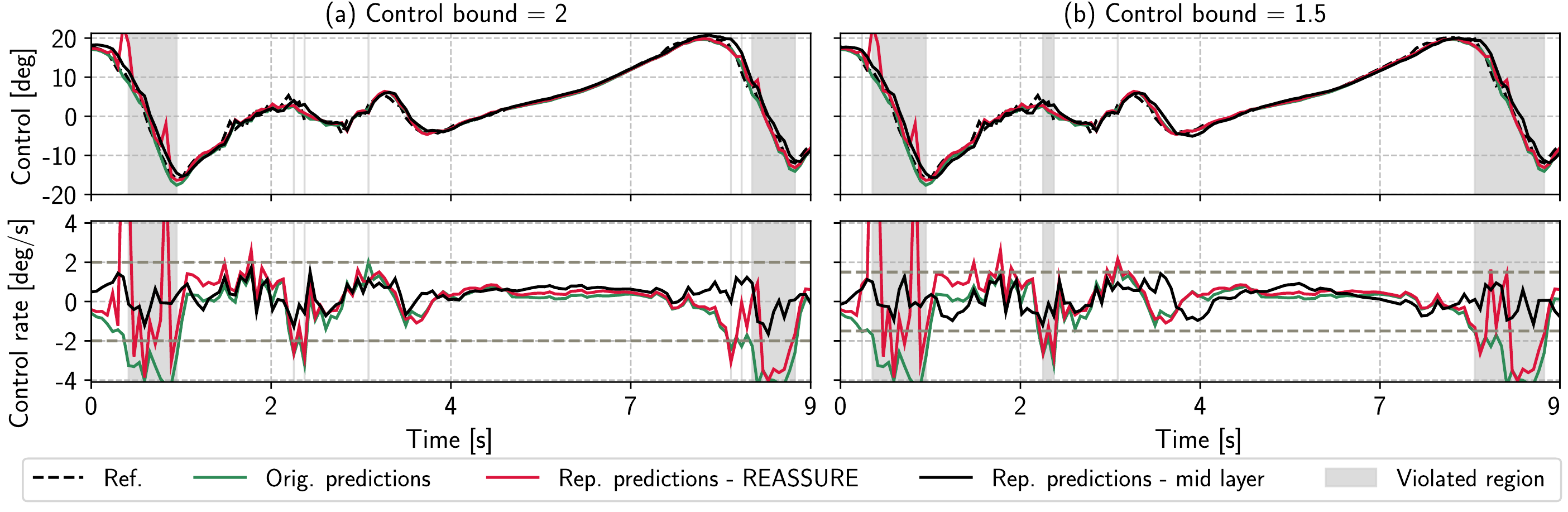}

    \caption{Input-output constraint: Ankle angles and Ankle angle rates for bounds (a) $\Delta \alpha_a = 2$ and (b) $\Delta \alpha_a = 1.5$}
    \label{fig:dynamic_const_xxx}
\end{figure*}
\paragraph{Conditional Constraint.} 
Depending on the ergonomic needs and medical history of a patient,
\begin{wrapfigure}{r}{0.42\textwidth}
\vspace{-15pt}
\begin{center}
    \includegraphics[scale=0.6]{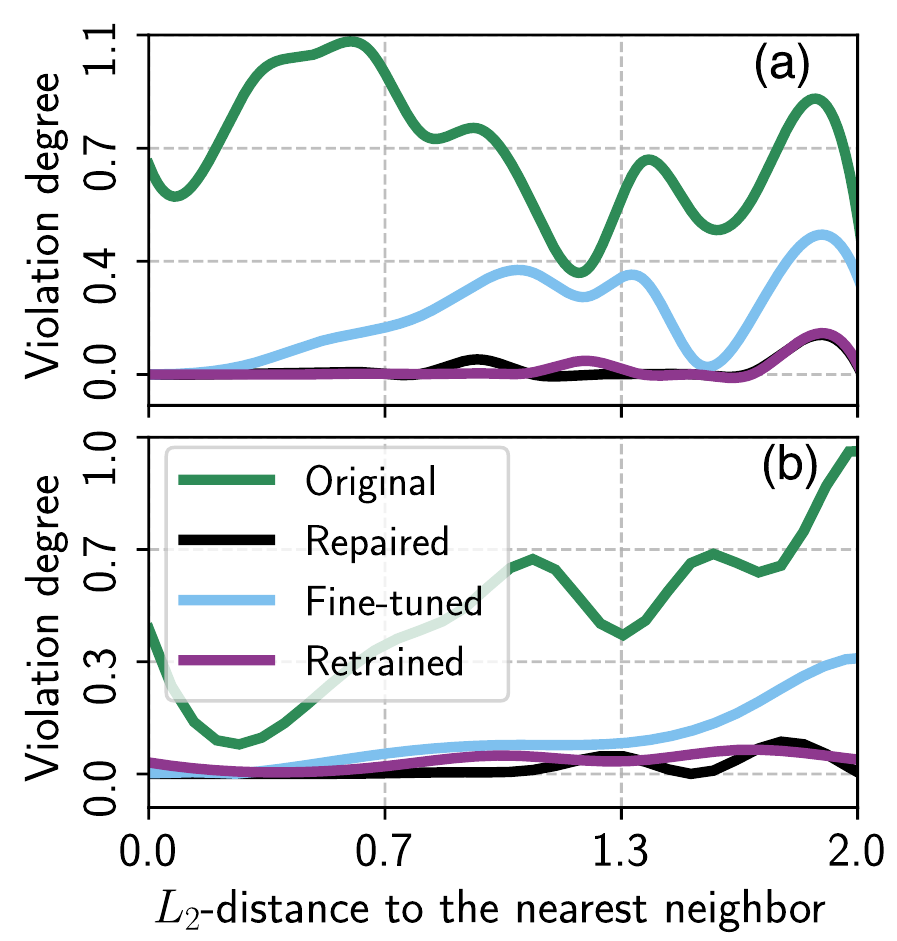}
  \end{center}
  \caption{The violation degree vs. $L_2$-distance between the test and repair sample inputs for (a) input-output constraint, and (b) conditional constraint cases.}
  \label{fig:violate_dyn_cond}
\vspace{-10pt}
\end{wrapfigure}
the attending orthopedic doctor or prosthetist may identify certain body configurations that are harmful, e.g., they may increase the risk of osteoarthritis or musculoskeletal conditions~\citep{morgenroth2012osteoarthritis, ekizos2018}. Following this rationale, we define a region $\mathcal{S}$ 
of joint angles space that should be avoided.
An example of such a region is demonstrated in Fig.~\ref{fig:flow_fig} as 
a grey box 
$\mathcal{S}=\{(\alpha_{ul}, \alpha_{a})~|~\alpha_{ul}\in [-2, -0.5], \alpha_{a}\in[1,3]\}$ 
in the joint space of ankle and femur angles. 
To satisfy this constraint the control rate should be tuned such that the joint ankle and femur angles stay out of set $\mathcal{S}$. This constraint can be defined as an if-then-else proposition  $\alpha_{ul}\in[-2, -0.5] \implies \big(\alpha_a\in(-\infty,1]\big)\vee\big(\alpha_a\in[3,\infty)\big)$ 
which can be formulated as the disjunction of linear inequalities on the network's output. 
For each given test input and its corresponding output $\alpha_a$, 
the degree of violation is defined as the distance of $\alpha_a$ to the box if $\alpha_a$ is outside the box, and $0$
 otherwise.
Figure \ref{fig:flow_fig} demonstrates the output of new policy after repairing with NNRepLayer. As it is shown, our method avoids the joint ankle and femur angles to enter the unsafe region $S$. 
Figure \ref{fig:violate_dyn_cond} (b) also illustrates low output violation degree for the distant test input samples from the repair input set.
Finally, we observed that repairing the last layer does not result in a feasible solution.
\paragraph{Comparison w\textbackslash Fine-tuning, Retraining, and REASSURE.}
In each experiment, we demonstrated the violation degree and the control signals of our method compared with fine-tuning, retraining \citep{sinitsin2019editable,taormina2020performance,ren2020few,DongEtAl2021qrs}, and REASSURE \citep{FuLi2022iclr}. 
Comparing to \citep{FuLi2022iclr}, while REASSURE guarantees the satisfaction of constraints in the local repaired linear regions, 
we showed that this method significantly reduces the performance of network in the repaired local regions, see Figures \ref{fig:global_const} and \ref{fig:dynamic_const_xxx}. 
This method cannot address the input-output constraints given the faulty samples, and it introduces 500 times more faulty samples compared to our technique. REASSURE cannot also accommodate the conditional constraints.
Unlike REASSURE that guarantees the satisfaction of constraints for the samples in the same linear region as the repaired samples, 
our technique only guarantees the satisfaction of constraints for the repaired samples. 
While we empirically showed the  generalizability of our technique in a local neighborhood of the repaired samples, 
our method does not theoretically guarantee the satisfaction of constraints for the unseen adversarial samples.
We proposed a sound algorithm in the supplementary materials, 
Sec. \ref{supp: soundness}, that  guarantees the safety for all other unseen samples. 
Table \ref{tab:my-table} better illustrates the success of our method in satisfying the constraints while maintaining the control performance.
\begin{wrapfigure}{r}{0.355\textwidth}
\vspace{-5pt}
 \centering \includegraphics[scale=0.7]{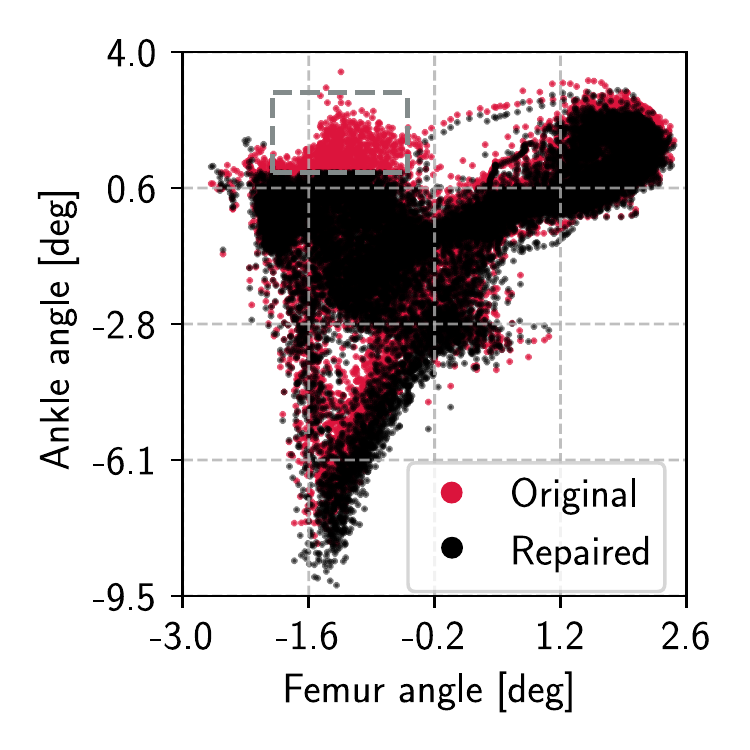}
  \caption{Enforcing the conditional constraints to keep the joint femur-ankle angles out of the grey box.}
  \label{fig:flow_fig}
  \vspace{-40pt}
\end{wrapfigure}
As shown in Table \ref{tab:my-table}, retraining and NNRepLayer both perform well in maintaining the 
minimum absolute error and the generalization of constraint satisfaction to the unseen testing samples 
for global and input-output constraints. 
However, the satisfaction of if-then-else constraints is challenging for retraining and fine-tuning as the repair efficacy is dropped by almost $30\%$ using these techniques. 
It also highlights the power of our technique in generalizing the satisfaction of conditional constraints to the unseen cases.
For further details on the comparison results, read the supplementary materials, Sec. \ref{supp: compare}.



\begin{table*}[t]
\centering
\caption{The table reports: 
RT: runtime, 
MAE: Mean Absolute Error between the repaired and the original outputs, 
RE: the percentage of adversarial samples that are repaired (Repair Efficacy), 
and 
IB: the percentage of test samples that were originally safe but became faulty after the repair (Introduced Bugs). 
The metrics are the average of 50 runs. 
}
\label{tab:my-table}

\begin{adjustbox}{width=1\textwidth,center=\textwidth}
\setlength{\aboverulesep}{0pt}
\setlength{\belowrulesep}{0pt}
\setlength{\extrarowheight}{.75ex}

\begin{tabular}{@{}
ccccccccc
@{}}
\toprule
 & \multicolumn{4}{c}{\cellcolor[HTML]{EFEFEF} NNRepLayer} 
 & \multicolumn{4}{c}{REASSURE~\citep{FuLi2022iclr}} \\ 
    &\cellcolor[HTML]{EFEFEF} RT [s] & \cellcolor[HTML]{EFEFEF} MAE & \cellcolor[HTML]{C0C0C0} RE [\%] & \cellcolor[HTML]{C0C0C0} IB [\%] 
    &RT [s]& MAE & RE [\%] & IB [\%]  \\ \midrule
Global  
        & \cellcolor[HTML]{EFEFEF} $233\pm159$ & \cellcolor[HTML]{EFEFEF} $1.4\pm 0.11$ & \cellcolor[HTML]{C0C0C0} $99\pm 1$ & \cellcolor[HTML]{C0C0C0} $0.09 \pm 0.20$
        & $14 \pm 1$ & $2.3\pm 0.78$ & $97\pm 1$ & $0$
        \\
Input-output 
        & \cellcolor[HTML]{EFEFEF} $112\pm122$ & \cellcolor[HTML]{EFEFEF} $0.5\pm 0.03$ & \cellcolor[HTML]{C0C0C0} $98\pm 1$ & \cellcolor[HTML]{C0C0C0} $0.19 \pm 0.18$
        & $30 \pm 8$ & $0.6\pm 0.03$ & $19\pm 4$ & $85\pm 5$
        \\
Conditional  
        & \cellcolor[HTML]{EFEFEF} $480\pm110$ & \cellcolor[HTML]{EFEFEF} $0.35\pm 0.07$ & \cellcolor[HTML]{C0C0C0} $93\pm 2$ & \cellcolor[HTML]{C0C0C0} $0.11 \pm 0.26$
        & Infeasible & Infeasible & Infeasible & Infeasible
        \\ 
        \toprule
& \multicolumn{4}{c}{Fine-tune} 
 & \multicolumn{4}{c}{Retrain} \\ 

    &RT [s]& MAE & RE [\%] & IB [\%]  
    &RT [s]& MAE & RE [\%] & IB [\%]  \\ \midrule
Global  
        & $25\pm13$ & $1.2\pm 0.03$ & $97\pm 4$ & $0.95 \pm 0.45$
        & $127 \pm 30$ & $1.4\pm 0.08$ & $98\pm 3$ & $0.65 \pm 0.40$
        \\
Input-output 
        & $8\pm 2$ & $0.6\pm 0.03$ & $88\pm 2$ & $2.47 \pm 0.49$
        & $101 \pm 1$ & $0.5\pm 0.04$ & $98\pm 1$ & $0.28 \pm 0.32$
        \\
Conditional  
        & $18\pm 3$ & $0.7\pm 0.10$ & $72\pm 5$ & $0.27 \pm 0.25$
        & $180 \pm 2$ & $0.31\pm 0.03$ & $76\pm 2$ & $0.12 \pm 0.35$
        \\ \bottomrule 
\end{tabular}%
\end{adjustbox}
\end{table*}



\section{Conclusion \& Discussion}
\label{sec:conclusion}
In this paper, we introduced an algorithm for training safe neural network controllers that satisfy a formal set of safety constraints. Our approach, NNRepLayer, performs a global optimization step in order to perform layer-wise repair of neural network weights. In real-robot experiments, we have shown that the introduced methodology produces safe neural policies for a lower-leg prosthesis satisfying a variety of constraints. We argue that this type of approach is critical for human-centric and safety-critical applications of robot learning, e.g., the next-generation of assistive robotics.

\paragraph{Discussion.}
The introduced approach does not generally ensure that for \emph{any} input the constraints will be satisfied. 
Instead it guarantees this property for all data points provided at the time of repair. 
Hence, proper care has to be taken to ensure that the repair process involves representative samples of the variety of inputs seen in the application domain. 
From a computational vantage point,
solving the MIQP underlying NNRepLayer is a  demanding process which scales with the size of the network. 
In our experiments, we successfully repaired NN layers with up to 256 neurons, 
with global optimization taking between multiple minutes and up to 10 hours (read the supplementary materials, Sec. \ref{supp:256}, for the detailed experimental results). 
Moreover, we showed in the supplementary materials, Sec. \ref{supp: subnode},
that the repair of randomly selected sub-nodes of a hidden layer can accurately repair the network in much shorter time (more that 12 times faster than the full repair).
Finally, our approach is limited to repairing individual layers in a network. 
Early results on iteratively repairing multiple layers are promising and will be reported in the future. 
However, our approach cannot simultaneously repair multiple layers or the entire network.



\clearpage
\acknowledgments{This work was partially supported by the National Science Foundation under grants CNS-1932068, IIS-1749783, and CNS-1932189. 
}


\bibliography{references}  

\end{document}


\maketitle
\section{Interval Arithmetic Method}
\label{supp: IA}
To illustrate how we generated a tight valid bound for each ReLU activation node, we used the Interval Arithmetic method \citep{moore2009introduction, TjengXT2019iclr}. 
Interval arithmetic is widely used in verification to find an upper and a lower bounds over the relaxed ReLU activations given a bounded set of inputs. 
We used the same approach to find the tight bounds over the ReLU nodes assuming the weights can only perturb inside a bounded $l_{\infty}$ error with respect to the original weights. 
Assume we denote each input variable of repair layer $L$ as $x^{L-1}(i)$, the weight term that connect $x^{L-1}(i)$ to $x^{L}(j)$ as $\theta^L_w(ij)$, and the bias term of nodes $x^L(j)$ as $\theta^L_b(j)$. 
Given the bounds for the variables $\theta^L_w(ij)\in \big[\underline{\theta}^L_w(ij), \bar{\theta}^L_w(ij)\big]$ 
and $\theta^L_b(ij)\in \big[\underline{\theta}^L_b(ij), \bar{\theta}^L_b(ij)\big]$, the interval arithmetic gives the valid upper and lower bounds for $x^{L}(j)$ as
\begin{align*}
&\bar{x}^{L}(j) = \sum_{i} \Big(\bar{\theta}^L_w(ij)\max(0,x^{L-1}(i)) + \underline{\theta}^L_w(ij)\min(0,x^{L-1}(i))\Big) + \bar{\theta}^L_b(ij),\text{ and}\\
&\underline{x}^{L}(j) = \sum_{i} \Big(\underline{\theta}^L_w(ij)\max(0,x^{L-1}(i)) + \bar{\theta}^L_w(ij)\min(0,x^{L-1}(i))\Big) + \underline{\theta}^L_b(ij),
\end{align*}
respectively. The bounds over the ReLU nodes in the subsequent layers $l = L+1,\cdots N$ are obtained as 
\begin{align*}
&\bar{x}^{l}(j) = \sum_{i} \Big(\bar{x}^{l-1}\max(0,\theta^l_w(ij)) + \underline{x}^{l-1}\min(0,\theta^l_w(ij))\Big) + \theta^l_b(ij),\\
&\underline{x}^{l}(j) = \sum_{i} \Big(\underline{x}^{l-1}\max(0,\theta^l_w(ij)) + \bar{x}^{l-1}\min(0,\theta^l_w(ij))\Big) + \theta^l_b(ij).
\end{align*}

\newpage
\section{Soundness of NNRepLayer}
\label{supp: soundness}
Our technique only guarantees the satisfaction of constraints for the repaired samples. 
While we empirically showed the  generalizability of our technique, our method does not guarantee the satisfaction of constraints for the unseen adversarial samples. 
To address this problem, we propose Algorithm \ref{alg:verifier} that guarantees the satisfaction of constraints $\Psi$ for the inputs $x\in\mathcal{X}_r$. 
In this algorithm, NNReplayer is employed with a sound verifier \citep{huang2017safety,katz2017reluplex} in the loop such that our method first returns the repaired network $\pi^{r}_{\theta}$. Then, the verifier evaluates the network. 
If the algorithm terminates, the network is guaranteed to be safe for all other unseen samples in the target input space $\mathcal{X}_r$. Otherwise, the network is not satisfied to be safe and the verifier provides the newly found adversarial samples $\mathcal{X}_r$ for which the guarantees do no hold. In turn, NNRepLayer uses the given samples by the verifier to repair the network. This loop terminates when the verifier confirms the satisfaction of constraints. 
\begin{algorithm}[tbh]
\caption{NNRepLayer in a loop with a sound verifier}
\label{alg:verifier}
\begin{algorithmic}[1]
\renewcommand{\algorithmicrequire}{\textbf{Input:}}
\renewcommand{\algorithmicensure}{\textbf{Output:}}
\REQUIRE $\pi^{o}_{\theta}, \mathcal{X}_{r}, \Psi$
\ENSURE  $\pi^{r}_{\theta}$
\STATE $\pi^{r}_{\theta}\leftarrow \pi^{o}_{\theta}$
\WHILE{$\mathcal{X}_{r}\notin \emptyset$}
\STATE $\pi^{r}_{\theta}\leftarrow \textsc{NNRepLayer}(\pi^{r}_{\theta}, \mathcal{X}_{r}, \Psi)$
\STATE $\mathcal{X}_{r}\leftarrow \textsc{Verifier}(\pi^{r}_{\theta})$
\ENDWHILE
\end{algorithmic}
\end{algorithm}

\begin{theorem}
\label{theorem: soundness}
Assume $\textsc{Verifier}()$ is a sound verifier \citep{huang2017safety, katz2017reluplex}. If the Algorithm \ref{alg:verifier} terminates, the predicate $\Psi$ is satisfied by the repaired network $\pi^{r}_{\theta}$.
\end{theorem}
\begin{proof}
    Given the sound verifier $\textsc{Verifier}()$, if the algorithm terminates, $\mathcal{X}_{r}$ is empty which means $\textsc{Verifier}()$ did not find other samples that violate $\Psi$. Therefore,  the predicate $\Psi$ is guaranteed to be satisfied by $\pi^{r}_{\theta}$.
\end{proof}

\newpage
\section{Comparing NNRepLayer with the Other Repair Techniques}
\label{supp: compare}
We run the repair codes from REASSURE \citep{fu2021sound} and PRDNN \citep{sotoudeh2021provable} on our problem. To ensure the accuracy of formulated repair using the technique presented in \citep{fu2021sound}, we contacted the authors. Running our control tasks on the code provided in \citep{sotoudeh2021provable} returns error. We realized the error is due to the applicability of \citep{sotoudeh2021provable} only for the networks with one and two dimensional inputs (as the authors also mentioned in the paper \citep{sotoudeh2021provable}{}). 
As shown in Fig. \ref{fig:global_const} (of the paper), REASSURE~\citep{fu2021sound} accurately satisfies the bounding constraint. However, it introduces controls with large magnitudes in the repaired regions.
Fig. \ref{fig:dynamic_const_xxx} (of the paper) shows that REASSURE~\citep{fu2021sound} fails to satisfy the input-output constraints while our method satisfies these constraints.
Overall, the repair methods \citep{fu2021sound, sotoudeh2021provable} are either infeasible to be applied to the network sizes we used, or perform poorly and cannot accommodate the complicated constraints that we address.

Table \ref{table:compare} shows the detailed comparison between our technique and the other repair methods in the literature. Table \ref{table:compare} and our empirical evaluations illustrate that our method is the only repair method in the literature that can impose complicated hard constraints to the network.








\begin{table}[tbh]
\centering
\caption{ Comparing our technique with the other methods in the literature}
\label{table:compare}
\begin{adjustbox}{width=1\textwidth,center=\textwidth}
\setlength{\aboverulesep}{0pt}
\setlength{\belowrulesep}{0pt}
\setlength{\extrarowheight}{.75ex}
\begin{tabular}{@{}c>{\columncolor[HTML]{EFEFEF}}ccc>{\columncolor[HTML]{EFEFEF}}c>{\columncolor[HTML]{EFEFEF}}c}
\toprule
  & Train SAT Guarantee      & Generalization & Side Effect & Complicated Hard Constraints & Hidden Layer Repair\\ \midrule
  \begin{tabular}{@{}c@{}}Fine-tune and Retrain \\ \citep{sinitsin2019editable,ren2020few,DongEtAl2021qrs}\end{tabular} & No & No & Yes & No & \cellcolor[HTML]{C0C0C0} Yes\\ 
  Arch. Extension~\citep{fu2021sound, sotoudeh2021provable} & \cellcolor[HTML]{C0C0C0}Yes & No  & Yes & No & --- (functional space)\\ 
  Goldberg et al.~\citep{goldberger2020minimal} & \cellcolor[HTML]{C0C0C0}Yes & No &  Yes & No & No\\ 
  Our method & \cellcolor[HTML]{C0C0C0}Yes  & No & Yes & \cellcolor[HTML]{C0C0C0}Yes & \cellcolor[HTML]{C0C0C0}Yes\\ 
  \bottomrule 
\end{tabular}%
\end{adjustbox}
\end{table}





 
\newpage
\section{Testing Repair on Larger Networks}
\label{supp:256}
To demonstrate the scalability of our method, we conducted a repair experiment on a network with 256 neurons in each hidden layer. We used 1000  samples for repair and 2000 samples for testing. We formulate this problem in MIQP and run the program on a Gurobi \citep{gurobi} solver. We terminated the solver after 10 hours and report the best found feasible solution. Figure \ref{fig:net_256} and Table \ref{table:net_256} show the control signal, and the statistical results of this experiment, respectively. As demonstrated, our technique repaired a network with up to 256 nodes with 100\% repair efficacy in 10 hours. 
\begin{table}[tbh]
\centering
\caption{ Experimental results for repairing a network with 256 nodes in each hidden layer for the input-output constraint repair, maximum ankle angle rate of $2$ [rad/s]. The table reports the size of network, the number of samples, the Mean Absolute Error (MAE) between the repaired and the original outputs, the percentage of adversarial samples that are repaired (Repair Efficacy), and the runtime.}
\label{table:net_256}
\begin{adjustbox}{width=0.9\textwidth,center=\textwidth}
\begin{tabular}{@{}cccccc}
\toprule
  & Network Size      & Number of Samples & MAE & Repair Efficacy [\%] & Runtime [h] \\ \midrule
  Input-output Constraint & 256 & 1000 & 0.62 & 100 & 10\\ \bottomrule 
\end{tabular}%
\end{adjustbox}
\end{table}
\begin{figure*}[tbh]

    \centering
    \includegraphics[width=\textwidth]{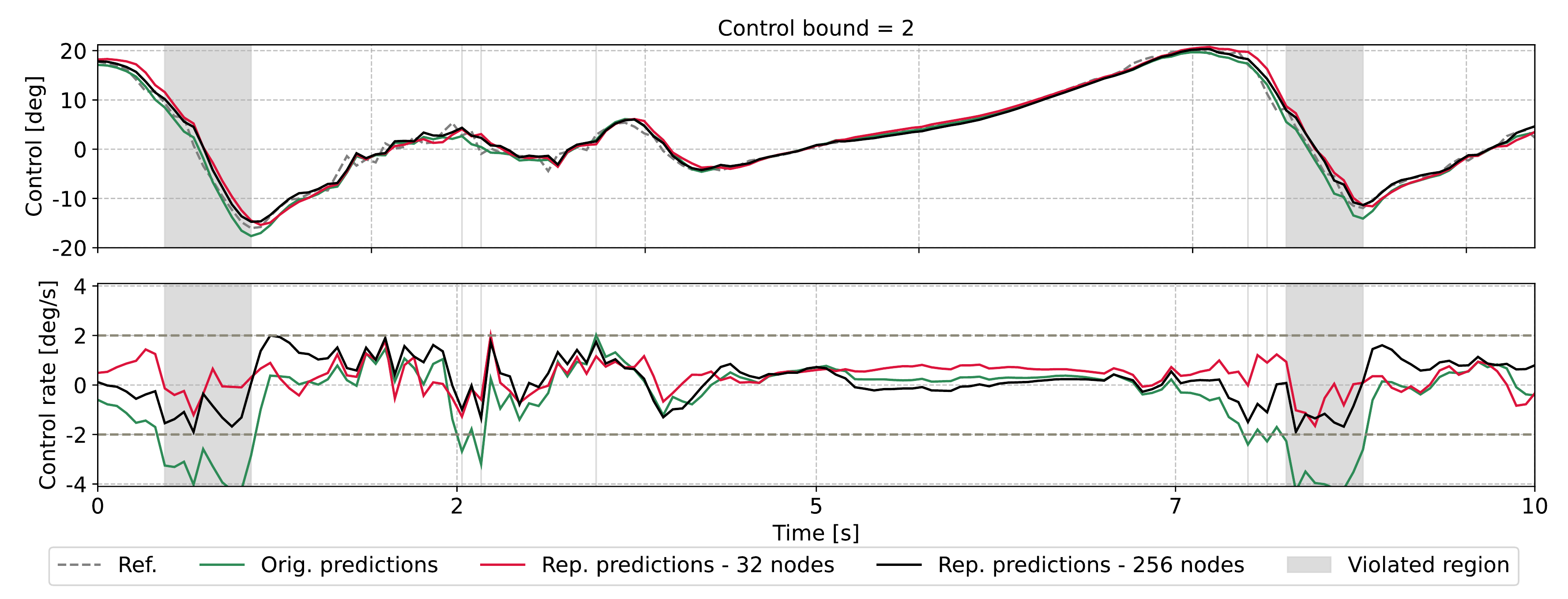}

    \caption{Input-output constraints for networks with 32 (red) and 256 (black) nodes in each hidden layer: Ankle angles and Ankle angle rates for bounds $\Delta \alpha_a = 2$.}
    \label{fig:net_256}
\end{figure*}

\newpage
\section{Repairing a Single Layer Partially}
\label{supp: subnode}
In this section, we provide examples for the computational speedups and heuristics that lead to faster neural network repair 
by repairing randomly selected nodes of a single hidden layer in a network with 64 hidden nodes for 35 times. 
We let the solver run for 30 minutes in each experiment (versus the full repair that is solved in 6 hours). 

Fig. \ref{fig:random_nodes} demonstrates the mean absolute error (MAE), the total number of repaired weights, repair efficacy, and the original MIQP cost. 
To impose sparsity to the weight changes of the repair layer, we solved the original full repair by adding the $l_{1}$ norm-bounded error of repaired weights with respect to their original values to the MIQP cost function. 
The bold bars in Fig. \ref{fig:random_nodes} demonstrate the results of repairing the $10$ randomly-selected sparse nodes. Repairing of the obtained sparse nodes reached a cost value very close to the cost value of the originally full repair problem ($42.99$ versus $40.29$), \textbf{in only 30 minutes} versus 6 hours. As illustrated, repair of some random nodes also results in infeasibility (blank bars) that shows these nodes cannot repair the network. In our future work, we aim to explore the techniques such as neural network pruning \citep{hassibi1993optimal, lecun1989optimal}, to select and repair just the layers and nodes that can satisfy the constraints instead of repairing a full layer. Our results in this experiment show that repairing partial nodes can significantly decrease the computational time of our technique.
\begin{figure*}[thb]

    \centering
    \includegraphics[width=\textwidth]{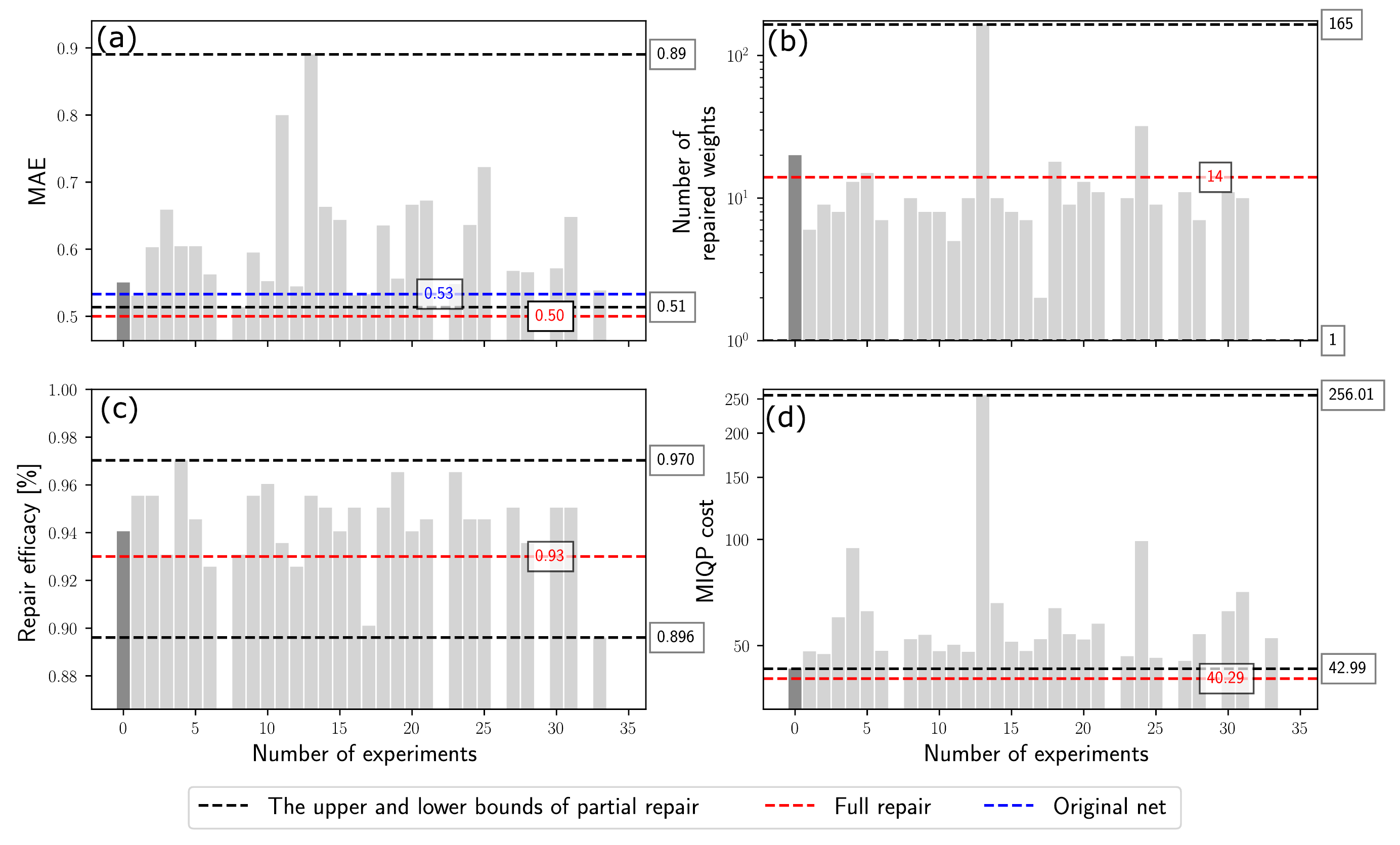}

    \caption{Partial repair versus full repair: we repaired 10 randomly selected nodes for 30 minutes in a network with 64nodes in each hidden layer (a) mean absolute error (MAE), (b) the total number of repaired weights, (c) repair efficacy, and (d) MIQP cost. We performed this random selections for 35 times.}
    \label{fig:random_nodes}
\end{figure*}

\newpage












\clearpage


\bibliography{references}  